\documentclass{article}
\usepackage{graphicx} 
\usepackage{hyperref} 
\usepackage{amsmath, amsthm, amssymb}
\usepackage{tikz}
\usepackage{tcolorbox}
\tcbuselibrary{skins}
\usepackage{dashrule} 
\usepackage{xcolor}
\usepackage{wrapfig}
\usepackage{fullpage}

\makeatletter
\newcommand*{\rom}[1]{\expandafter\@slowromancap\romannumeral #1@}
\makeatother

\usepackage[ruled, lined, linesnumbered, longend]{algorithm2e}

\newtheorem{theorem}{Theorem}

\newtheorem{lemma}{Lemma}

\newtheorem{definition}{Definition}

\title{A completely uniform transformer for parity}

\author{ Alexander Kozachinskiy$^{1}$ \and Tomasz Steifer$^{2,3}$}
\date{%
    $^1$Centro Nacional de Inteligencia Artificial, Chile\\%
  $^2$Instituto de Ingeniería Matemática y Computacional, Universidad Católica de Chile\\
    $^3$Institute of Fundamental Technological Research, Polish Academy of Sciences\\%
}

\begin{document}
\maketitle
\begin{abstract}
    We construct a 3-layer constant-dimension transformer, recognizing the parity language, where neither parameter matrices nor the positional encoding depend on the input length. This improves upon a construction of Chiang and Cholak who use a positional encoding, depending on the input length (but their construction has 2 layers).
\end{abstract}

\section{Introduction}
One of the ways to do mathematical analysis 
of the capabilities and limitations of the transformer architecture~\cite{vaswani2017attention}  is to study formal languages, recognizable by them. Namely, for a given formal language $L$, we study, if there exists a choice of parameters in the transformer architecture, for which words from $L$ are accepted and words not from $L$ are rejected by the resulting transformer. A seminal work of Hahn~\cite{hahn2020theoretical} performed such analysis for a number of formal languages, including the \emph{parity language}, consisting of binary words with even number of 1s. 
Hahn have shown that transformers, recognizing this language, must have low confidence,  partially explaining  an empirically observed struggle of transformers in learning this language~\cite{bhattamishra2020ability, deletang2022neural}.

However, this does not exclude the possibility of existence of a theoretical solution for this language. And indeed, as was shown by Chiang and Cholak~\cite{Chiang2022OvercomingAT}, there exists a 2-layer transformer with constant embedding dimension, recognizing the parity language. Their construction is \emph{uniform} in a sense that parameter matrices in it do no depend on the input length. However, there is one aspect of their construction which is not completely uniform -- the \emph{positional encoding}. At position $i$, they use $i/n$, where $n$ is the input length. This means that their positional encoding has to be reset each time we want to use their transformer for a larger input length.

In this paper, we get rid of this disadvantage of their construction by giving a \emph{completely uniform} 3-layer transformer, recognizing parity. That is, in our construction, neither parameter matrices nor positional encoding depend on $n$. We do not use neither positional masks nor layer norm.

It remains open if the parity language can be recognized by a 1-layer transformer with constant embedding dimension, even non-uniformly. Current lower bound methods against 1-layer transformers~\cite{bhattamishra2024separations,peng2024limitations,DBLP:conf/nips/SanfordHT23} do not seem to work against parity.

\section{Preliminaries}
An attention layer of dimension $d$ is a length-preserving function $A\colon(\mathbb{R}^d)^*\to(\mathbb{R}^d)^*$,  given by three matrices $K, Q, O\in\mathbb{R}^{d\times d}$ and a neural network $\mathcal{N}\colon\mathbb{R}^d\to\mathbb{R}^d$ with ReLU activation. On input $(f_1, \ldots, f_n)\in(\mathbb{R}^d)^n$, the output $A(f_1, \ldots, f_n)$ is computed as follows. First, we define:
\[a_i = \frac{\sum\limits_{j = 1}^n e^{\langle K f_j, Q f_i\rangle} f_j}{\sum\limits_{j = 1}^n e^{\langle K f_j, Q f_i\rangle}}, \qquad i = 1, \ldots, n,\]
and set:
\[A(x_1 \ldots x_n)_i = \mathcal{N}(f_i + Oa_i), \qquad i = 1, \ldots, n. \]
 Applying $\mathcal{N}$ allows us to do arbitrary piece-wise linear transformation of $\mathbb{R}^d$.
\begin{definition}
    We say that a language $L\subseteq\{0, 1\}^*$ is recognizable by a completely uniform transformer with $C$ layers if there exists $d\in\mathbb{N}$, $C$ attention layers $A_1, \ldots, A_C$ of dimension $d$, and a letter embedding  $\ell\colon\{0, 1\}\to\mathbb{R}^d$, and a position encoding $  p\colon\mathbb{N}\to\mathbb{R}^d$ such that for any $n\in\mathbb{N}$, and for any $x = x_1 x_2 \ldots x_n\in\{0, 1\}^n$, the following holds. Define:
    \[f_1 = \ell(x_1) + p(1), \ldots, f_n = \ell(x_n) + p(n),\]
    and set
    \[(g_1, \ldots, g_n) = A_C\circ \ldots \circ A_1(f_1\ldots f_n).\]
    Then the following must hold. If
    $x\in L$, we have $g_1^1 > 0$, and if $x\notin L$, we have $g_1^1 < 0$. 
\end{definition}
Note that in our definition, the positional encoding $p$ does not take $n$, the input length, as an input.
\section{Construction}
\begin{lemma}
\label{lem_f(n)}
    For any function $f\colon\mathbb{N}\to\mathbb{R}$, there exists a completely uniform transformer that, for any input length $n$, computes $f(n)$ in every position in one layer.
\end{lemma}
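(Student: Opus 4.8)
The plan is to make the single attention layer essentially trivial and to push all of the work into the positional encoding, exploiting the fact that \emph{uniform} attention computes an exact coordinatewise average — and hence produces an exact factor of $1/n$ that we can subsequently cancel.

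First, I would fix the telescoping weights $w_1 = f(1)$ and $w_j = j\,f(j) - (j-1)\,f(j-1)$ for $j \ge 2$, so that $\sum_{j=1}^{n} w_j = n\,f(n)$ for every $n \ge 1$. Then I would work in dimension $d = 2$, set the letter embedding to $\ell(0) = \ell(1) = (0,0)$ and the positional encoding to $p(j) = (w_j, 0)$, so that $f_j = (w_j, 0)$ regardless of the letter $x_j$.

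Second, I would take $K = Q = 0$ in the attention layer. Then every attention score $\langle K f_j, Q f_i\rangle$ equals $0$, all softmax weights equal $1/n$, and $a_i = \tfrac{1}{n}\sum_{j=1}^n f_j = \bigl(\tfrac{1}{n}\sum_{j=1}^n w_j,\ 0\bigr) = (f(n), 0)$ for every position $i$. Taking $O$ to be the linear map $(t,s)\mapsto (0,t)$ and $\mathcal{N}$ the identity (trivially realizable by a ReLU network), the output at position $i$ is $\mathcal{N}(f_i + O a_i) = (w_i, 0) + (0, f(n)) = (w_i, f(n))$, whose second coordinate equals $f(n)$ in every position, as required.

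The only real idea here is the telescoping choice of $w_j$; after that the construction is pure bookkeeping with the matrices. It is worth stressing why there is no tension with $\mathcal{N}$ being merely piecewise linear: we never attempt to compute the arbitrary function $f$ from $n$ with the network — instead $f$ is hard-wired into $p$, and the softmax recovers $f(n)$ exactly by averaging. I expect the only points needing a moment's care in the write-up to be the boundary term in the telescoping sum (the $j=1$ case) and checking that the letter embedding does not contaminate the coordinate carrying $w_j$.
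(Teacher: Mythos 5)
Your proposal is correct and is essentially identical to the paper's proof: the same telescoping positional encoding $p(1)=f(1)$, $p(j)=jf(j)-(j-1)f(j-1)$, recovered by uniform softmax averaging with $K=Q=0$. The extra bookkeeping (dimension $2$, the choice of $O$, keeping the letter embedding out of the relevant coordinate) is a fine elaboration of details the paper leaves implicit.
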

\begin{proof}
    We need a positional encoding $p\colon\mathbb{N}\to\mathbb{R}$ such that:
    \[\frac{p(1) + \ldots + p(n)}{n} = f(n),\]
    for every $n$ (this average is computable via softmax with uniform weights),
    which is achievable by setting $p(1) = f(1)$ and $p(i) = if(i) - (i - 1) f(i - 1)$ for $i\ge 2$,
\end{proof}

\begin{theorem}
    There exists a 3-layer completely uniform transformer, recognizing the parity language.
\end{theorem}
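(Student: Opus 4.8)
The plan is to build the transformer in three stages, using Lemma~\ref{lem_f(n)} to inject length-dependent constants "for free." First, I would use one layer (really the same layer as step two, combined) to compute at every position $i$ the quantity $1/n$ — i.e.\ apply Lemma~\ref{lem_f(n)} with $f(n)=1/n$ — so that each position knows the reciprocal of the input length. Simultaneously, in the first layer I would have each position attend uniformly to compute $s/n$, where $s=\#\{j : x_j = 1\}$ is the number of ones, simply by averaging the letter embeddings $\ell(x_j)$ with uniform softmax weights (set $K=Q=0$); this is a clean, length-independent operation. So after layer one every position holds both $1/n$ and $s/n$, and in particular the value $s$ itself is recoverable as the ratio, but more usefully the pair $(1/n, s/n)$ is available.

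The heart of the argument is to convert $s/n$ into the parity bit $s \bmod 2$. Here is where I expect the main obstacle, and where I would spend the most care. The idea is to reduce parity-of-$s$ to the kind of "is there a position of a given type" query that attention can answer: introduce at each position $i$ a target value depending on $i$ and $n$, and use a second attention layer whose logits $\langle K f_j, Q f_i\rangle$ are engineered so that the softmax sharply concentrates (for large enough logit gap) on the unique position $j$ whose index encodes the integer nearest to $s$ — then read off the parity of that index via a piecewise-linear "sawtooth" applied by the ReLU network $\mathcal N$. Concretely, I would have position $i$ carry $p(i)$ chosen (via Lemma~\ref{lem_f(n)}-style bookkeeping, or directly as part of the positional encoding) so that the inner product with the query built from $s/n$ and $1/n$ is maximized exactly at $j = s$, with a gap bounded below by an absolute constant times $1/n$ — this forces me to amplify, which costs the third layer: after the (soft, hence slightly fuzzy) argmax in layer two I get $s \pm \varepsilon$ with $\varepsilon$ controlled, and the piecewise-linear map $t \mapsto \operatorname{dist}(t, 2\mathbb Z) - 1/2$ (a triangle wave of period $2$, realizable with $\relu$ on a bounded range once we know $t$ lies in $[0,n]$ — but boundedness in $n$ is exactly the issue). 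The real fix is to never materialize $s$ itself but to keep everything scaled by $1/n$: work with $s/n \in [0,1]$, and note parity of $s$ is a function with $n$ "teeth" on $[0,1]$, so a single fixed piecewise-linear map cannot extract it — this is the genuine difficulty, and it is why the softmax-concentration trick (which effectively lets one position "select" another and thereby read an unbounded index through a bounded-range nonlinearity) is needed rather than pure arithmetic.

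Assuming the selection step works, the remaining steps are routine. In the third layer I would clean up: take the (approximate) parity value produced in layer two, apply a $\relu$-based threshold to turn $s\bmod 2 \in \{0,1\}\pm\varepsilon$ into a value that is $>0$ on even $s$ and $<0$ on odd $s$ (recall parity language = even number of ones, so we want $g_1^1>0$ iff $s$ is even), and route it into coordinate $1$ at position $1$. The only position that matters for the output is position $1$, so I need not worry about what the other positions hold after layer three.

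The step I expect to be the main obstacle is the layer-two softmax-concentration estimate: showing that with $K,Q$ fixed (independent of $n$) the attention weights at the "correct" position $j=s$ dominate the sum of all others by a margin that, after multiplying by the bounded value vectors, leaves an error $\varepsilon$ small enough for the layer-three threshold to be correct — since the logit gaps are only $\Theta(1/n)$, the dominant weight is only $\approx 1/n$ larger than its competitors before normalization, and one must check the normalized error is still $o(1)$ (indeed $O(1/n)$ or so) uniformly in $n$, and that $\mathcal N$ operating on a bounded-range input can both amplify and de-fuzz it. Handling this cleanly — perhaps by choosing the positional encoding so that logits are $\langle \cdot,\cdot\rangle = -c\,n\,(j/n - s/n)^2 = -c(j-s)^2/n$, giving a gap of $c/n$ between the best and second-best, then bounding $\sum_{k\neq 0} e^{-ck^2/n}$ against $e^{0}$ — is where the careful computation lives; everything else is assembly via Lemma~\ref{lem_f(n)} and elementary $\relu$ gadgets.
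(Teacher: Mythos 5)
Your high-level plan --- use attention to select the position $j=s$ and read off its parity --- has the same shape as the paper's final layer, but the concentration step you yourself flag as ``where the careful computation lives'' does not go through as you set it up, and the missing idea there is essentially the paper's main construction. With logits of the form $-c(j-s)^2/n$, the gap between the best and second-best logit is only $c/n$ while there are $\Theta(n)$ competitors, so $\sum_{k\neq 0} e^{-ck^2/n}=\Theta(\sqrt{n/c})$; the normalized attention weight on $j=s$ is therefore $\Theta(1/\sqrt{n})\to 0$ rather than $1-o(1)$, and your proposed bound of the competitor sum ``against $e^0$'' fails. The obvious repair --- rescaling the logits by a factor $g(n)\to\infty$ --- would require the query at position $i$ to carry the exact product $g(n)\cdot s$ of a length-dependent constant with the data-dependent quantity $s/n$ held at the same position; but $K$ and $Q$ are linear and the per-position $\relu$ networks are only piecewise linear, so such products cannot be formed exactly. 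Products are available only bilinearly across the key/query split, with one factor coming from position $j$ and the other from position $i$.

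The paper's construction is built precisely around this obstruction. It first runs a softmax with logits $(-\ln n+\delta)(1-x_j)$, where the length-dependent factor $-\ln n+\delta$ (computed via Lemma~\ref{lem_f(n)}) sits on the query side and $1-x_j$ on the key side --- a legal bilinear product --- producing a scalar $\gamma$, a nonlinear function of $\Sigma$ and $n$, broadcast to every position. Each position $i$ then forms, by per-position piecewise-linear operations together with its positional encoding $\alpha/i-\alpha^2/i^2$, a score $a_i$ that Lemma~\ref{super_lemma} shows is uniquely maximized at $i=\Sigma$ (this needs the expansion of $1/(1+z)$ to second order and the choice $\alpha=1/100$; the resulting gap may shrink with $n$, and that is fine). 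In the last layer the score $a_i$ sits on the key side while an arbitrarily large amplification factor $f(n)$ (again Lemma~\ref{lem_f(n)}) sits on the query side, so the logit $a_i f(n)$ is again a legal bilinear product; since for each fixed $n$ there are only finitely many inputs, $f(n)$ can be chosen to force concentration on $i=\Sigma$, and the value read off is simply the positional encoding $(-1)^i$ --- no sawtooth over an unbounded range is needed (your own observation that a fixed piecewise-linear map cannot extract parity from $s/n$ applies equally to reading the parity of the selected index unless it is read through the $(-1)^i$ values). You also leave out the all-zeros case $\Sigma=0$, which the paper patches by outputting $\max\{\theta,\,1/(2n)-\Sigma/n\}$ in the last layer, but that is minor compared with the concentration/product issue above.
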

\begin{proof}

Assume that on input we get $x = x_1\ldots x_n\in\{0, 1\}^n$. Denote $\Sigma = x_1 + \ldots + x_n$. 
Let us first give a construction, assuming that $\Sigma \ge 1$, that is, that not all input bits are 0. We explain how to get rid of this assumption in the end of the proof.

The plan of the proof is as follows. At the first two layers, we need to compute a sequence of numbers
\[a_1, \,\, a_2, \ldots,\,\, a_n,\]
with the property that $a_\Sigma$ is strictly larger than any other number in the sequence. Assuming we have done this, at the last layer we can compute the following:
\begin{equation}
    \label{eq_limit}
  \frac{\sum\limits_{i = 1}^n \exp\{a_i f(n)\} \cdot (-1)^i}{\sum\limits_{i = 1}^n \exp\{a_i f(n)\}}.
\end{equation}
using a positional encoding $i\mapsto (-1)^i$ and Lemma \ref{lem_f(n)} for a function $f\colon\mathbb{N}\to\{0, 1\}$ of our  choice. Our goal for \eqref{eq_limit} is to encode parity, i.e., \eqref{eq_limit} must be positive if $\Sigma$ is even and negative otherwise.
Indeed, 
the limit of \eqref{eq_limit} as $f(n)\to +\infty$ is $(-1)^\Sigma$, because $a_\Sigma > a_i$ for $i\neq \Sigma$. For any fixed $n$, there are finitely many inputs of length $n$.  Hence, we can take $f(n)$ large enough so that, for any input $x$ of length $n$, the difference between \eqref{eq_limit} for this $x$ and $(-1)^\Sigma$ is at most $1/3$.

It remains to calculate a sequence $a_1, \ldots, a_n$ with this property at the first two layers.  By Lemma \ref{lem_f(n)}, we compute $\ln n $ at every position in the first layer. Then, for an absolute constant $\delta\in\mathbb{R}$, to be specified later, we set $\alpha = e^\delta$ and use the second layer to compute the following expression at every position:
\begin{equation}
    \label{eq_gamma}
    \gamma = \frac{\sum\limits_{i = 1}^n e^{(-\ln(n) + \delta)\cdot(1 - x_i)}\cdot x_i}{\sum\limits_{i = 1}^n e^{(-\ln(n) + \delta)\cdot(1 - x_i)}} = \frac{\Sigma}{\Sigma + (n - \Sigma)\cdot \frac{\alpha}{n}} = \frac{1}{1 + \frac{\alpha}{\Sigma} - \frac{\alpha}{n}}.
\end{equation}

It now suffices to prove the following lemma.
\begin{lemma}
\label{super_lemma}
    There exists $\alpha\in(0, 1)$ such that for all $n$ and $\Sigma\in\{1, \ldots, n\}$,
    the maximum of the expression:
    \[a_i = -\left|\gamma - 1 + \left(\frac{\alpha}{i} - \frac{\alpha}{n}\right) - \left(\frac{\alpha^2}{i^2} +\frac{\alpha^2}{n^2}\right)\right|\]
    over $i\in\{1, \ldots, n\}$
    is attained uniquely at $i = \Sigma$, where $\gamma$ is defined by \eqref{eq_gamma}.
\end{lemma}
Indeed, once we have computed $\gamma$, we can express $a_i$  by computing $\frac{\alpha}{n} - \frac{\alpha^2}{n^2}$ in every position by Lemma \ref{lem_f(n)}, and using a positional encoding $i\mapsto \frac{\alpha}{i} - \frac{\alpha^2}{i^2}$.  We can then calculate the absolute value using a constant-size ReLU network.
\begin{proof}[Proof of Lemma \ref{super_lemma}]
Observe that:
\[\left|\frac{1}{1 + z} - (1 - z + z^2)\right| = \frac{z^3}{1 + z} \le z^3\]
for $z \ge 0$. Applying this to $z = \frac{\alpha}{\Sigma} - \frac{\alpha}{n}$, we get:
\[\gamma = 1 - \left(\frac{\alpha}{\Sigma} - \frac{\alpha}{n}\right) + \left(\frac{\alpha}{\Sigma} - \frac{\alpha}{n}\right)^2 + \rho,  \]
for some $\rho$ with $|\rho|\le \frac{\alpha^3}{\Sigma^3}$. We now can write:
\[a_i = -\left|\left(\frac{\alpha}{i}-\frac{\alpha}{\Sigma}\right) + \left(\frac{\alpha^2}{\Sigma^2}-\frac{\alpha^2}{i^2}\right) - \frac{2\alpha^2}{\Sigma n} + \rho\right|\]
Denoting $b_i = \left(\frac{\alpha}{i}-\frac{\alpha}{\Sigma}\right)$, $c_i = \left(\frac{\alpha^2}{\Sigma^2}-\frac{\alpha^2}{i^2}\right)$ and $\lambda = - \frac{2\alpha^2}{\Sigma n}$, we get:
\[a_i = - |b_i + c_i + \lambda + \rho|\]
For $i = \Sigma$, we get $a_\Sigma = -|\lambda + \rho|$.  To prove that the maximum of $a_i$ is attained at $i = \Sigma$, we show that:
\[(1/10)\cdot |b_i| >  | c_i|, \qquad (1/10)\cdot|b_i|>  |\lambda|,\qquad (1/10)\cdot|b_i|>  |\rho|,\qquad \text{ for } i \neq \Sigma,\]
which implies that:
\begin{align*}
    -a_i = |b_i + c_i + \lambda + \rho| \ge  |b_i| - |c_i| - |\lambda| - |\rho|\ge (7/10) |b_i| > |\lambda| + |\rho| \ge -a_\Sigma.
\end{align*}
Firstly, for $i\neq \Sigma$, observe:
\begin{equation}
\label{eq_c}
    \frac{|b_i|}{|c_i|} = \frac{i \Sigma}{\alpha(\Sigma + i)}\ge \frac{1}{2\alpha},
\end{equation}
where the last inequality is due to the fact that $i, \Sigma \ge 1$. Next, for $i\neq \Sigma$, observe:
\begin{equation}
\label{eq_lambda}
    \frac{|b_i|}{|\lambda|} = \frac{\frac{\alpha|i - \Sigma|}{i \cdot \Sigma}}{\frac{2\alpha^2}{\Sigma n}}\ge \frac{1}{2\alpha}.
\end{equation}
Finally, for $i\neq \Sigma$, write:
\[\frac{|b_i|}{\rho}\ge \frac{\frac{\alpha|i - \Sigma|}{i \cdot \Sigma}}{\frac{\alpha^3}{\Sigma^3}}\ge \frac{1}{\alpha^2}\cdot \frac{|i - \Sigma| \cdot \Sigma}{i}\]
If $\Sigma \ge i/2$, since $|i - \Sigma|\ge 1$, we obtain $\frac{|b_i|}{\rho}\ge \frac{1}{2\alpha^2}$. If $\Sigma \le i/2$, we obtain $|i - \Sigma|\ge i/2$, implying the same bound
\begin{equation}
\label{eq_rho}
    \frac{|b_i|}{\rho}\ge \frac{1}{2\alpha^2},
\end{equation}
since $\Sigma \ge 1$. Taking $\alpha = 1/100$, we obtain that the right-hand sides of the  inequalities in \eqref{eq_c}, \eqref{eq_lambda}, \eqref{eq_rho} are all larger than  10, as required.
\end{proof}

Finally, we explain how to get rid of the assumption $\Sigma \ge 1$. Let us denote the value of the expression \eqref{eq_limit} by $\theta$. We have that $\theta$  is positive for even $\Sigma\ge 1$, and $\theta$ is  negative for odd $\Sigma \ge 1$. Now, at the first layer, we can compute the quantity 
$1/(2n) - (x_1 + \ldots + x_n)/n$ by taking the arithmetic mean of the input bits using sotftmax with uniform weights, and computing $1/(2n)$ by Lemma \ref{lem_f(n)}. Observe this quantity is positive for $\Sigma = 0$ and negative otherwise. It remains to output $\max\{\theta, 1/(2n) - (x_1 + \ldots + x_n)/n\}$ in the third layer.

\end{proof}

\end{document}